\documentclass[10pt,twocolumn,letterpaper,table]{article}

\usepackage{cvpr}

\usepackage{xcolor}
\definecolor{mygreen}{RGB}{24,165,103}

\usepackage[colorlinks]{hyperref}
\usepackage[hyperpageref]{backref}

\usepackage{times}
\usepackage{epsfig}
\usepackage{graphicx}
\usepackage{amsmath}
\usepackage{amssymb}
\usepackage{enumitem}


\cvprfinalcopy 


\ifcvprfinal\pagestyle{empty}\fi

\usepackage{multirow}
\usepackage{multicol}
\usepackage{booktabs}
\usepackage{pifont}
\usepackage{iftex}
\usepackage[utf8]{inputenc}
\usepackage[T1]{fontenc}

\usepackage{amsthm}
\usepackage{dsfont}
\usepackage{pifont}
\usepackage{nth}
\usepackage{booktabs}
\usepackage[font=small,bf,skip=0.8em]{caption}
\usepackage[draft]{fixme}
\usepackage{inconsolata}

\usepackage[group-separator={,}]{siunitx}
\sisetup{
  detect-all,
  detect-inline-family=math,
  detect-inline-weight=math,
  detect-display-math=true,
  output-exponent-marker=\ensuremath{\mathrm{e}}
}



\newcommand{\cmark}{\ding{51}}%
\newcommand{\xmark}{\ding{55}}%

\usepackage{url}

\setlength{\parindent}{0pt}
\setlength{\parskip}{1ex}

\usepackage{theoremref}
\numberwithin{equation}{section}
\theoremstyle{plain}

\newtheorem{theorem}{Theorem}

\newtheorem{proposition}[theorem]{Proposition}

\theoremstyle{definition}



\def\R{{\mathbb R}}

\let\on=\operatorname

\newcommand{\ud}{\,\mathrm{d}}

\DeclareMathOperator{\rp}{rp}
\DeclareMathOperator{\RP}{RP}

\newcommand{\eqdef}{\ensuremath{\stackrel{\mbox{\upshape\tiny def.}}{=}}}

\def\eg{\emph{e.g.}}  
\def\ie{\emph{i.e.}}

\def\wrt{w.r.t.}

\makeatother

\makeatletter
\newcommand*\rel@kern[1]{\kern#1\dimexpr\macc@kerna}
\newcommand*\widebar[1]{%
  \begingroup
  \def\mathaccent##1##2{%
    \rel@kern{0.8}%
    \overline{\rel@kern{-0.8}\macc@nucleus\rel@kern{0.2}}%
    \rel@kern{-0.2}%
  }%
  \macc@depth\@ne
  \let\math@bgroup\@empty \let\math@egroup\macc@set@skewchar
  \mathsurround\z@ \frozen@everymath{\mathgroup\macc@group\relax}%
  \macc@set@skewchar\relax
  \let\mathaccentV\macc@nested@a
  \macc@nested@a\relax111{#1}%
  \endgroup
}
\makeatother




\setcounter{totalnumber}{50}
\setcounter{topnumber}{50}
\setcounter{bottomnumber}{50}

\sloppy

\title{Metric Learning for Image Registration}
\author{Marc Niethammer\\
UNC Chapel Hill\\
{\tt\small \href{mailto:mn@cs.unc.edu}{mn@cs.unc.edu}}
\and
Roland Kwitt\\
University of Salzburg\\
{\tt\small \href{mailto:roland.kwitt@gmail.com}{roland.kwitt@gmail.com}}
\and
Fran\c{c}ois-Xavier~Vialard\\
LIGM, UPEM\\
{\tt\small \href{mailto:francois-xavier.vialard@u-pem.fr}{francois-xavier.vialard@u-pem.fr}}}

\begin{document}

\setlength{\abovedisplayskip}{0.65\abovedisplayskip}
\setlength{\belowdisplayskip}{0.65\belowdisplayskip}

\maketitle

\begin{abstract}
Image registration is a key technique in medical image analysis to estimate deformations between image pairs. A good deformation model is important for high-quality estimates. However, most existing approaches use ad-hoc deformation models chosen for mathematical convenience rather than to capture observed data variation. Recent deep learning approaches learn deformation models directly from data. However, they provide limited control over the spatial regularity of transformations. Instead of learning the entire registration approach, we learn a spatially-adaptive regularizer \emph{within} a registration model. This allows controlling the desired level of regularity and preserving structural properties of a registration model. For example, diffeomorphic transformations can be attained. Our approach is a radical departure from existing deep learning approaches to image registration by {\it embedding} a deep learning model in an optimization-based registration algorithm to parameterize and data-adapt the registration model itself. Source code is publicly-available at \textit{\url{https://github.com/uncbiag/registration}}.
\end{abstract}

\vspace{-0.5cm}

\section{Introduction}
\label{sec:intro}

Image registration is important in medical image analysis tasks to capture subtle, local deformations. Consequently, transformation models~\cite{holden2008review}, which parameterize these deformations, have large numbers of degrees of freedom, ranging from B-spline models with many control points, to non-parametric approaches~\cite{modersitzki2004numerical} inspired by continuum mechanics. Due to the large number of parameters of such models, deformation fields are typically regularized by \emph{directly} penalizing local changes in displacement or, more \emph{indirectly}, in velocity field(s) parameterizing a deformation.

Proper regularization is important to obtain high-quality deformation estimates. Most existing work simply imposes the same spatial regularity \emph{everywhere} in an image. This is unrealistic. For example, consider registering brain images with different ventricle sizes, or chest images with a moving lung, but a stationary rib cage, where different deformation scales are present in different image regions. Parameterizing such deformations from first principles is difficult and may be impossible for between-subject registrations. Hence, it is desirable to {\it learn local regularity} from data. One could replace the registration model entirely and learn a parameterized regression function $f_\Theta$ from a large dataset. At inference time, this function then maps a moving image to a target image \cite{de2017end}. However, regularity of the resulting deformation does not arise naturally in such an approach and typically needs to be enforced after the fact.

\begin{figure}[t!]
  \centering
    \includegraphics[width=0.85\columnwidth]{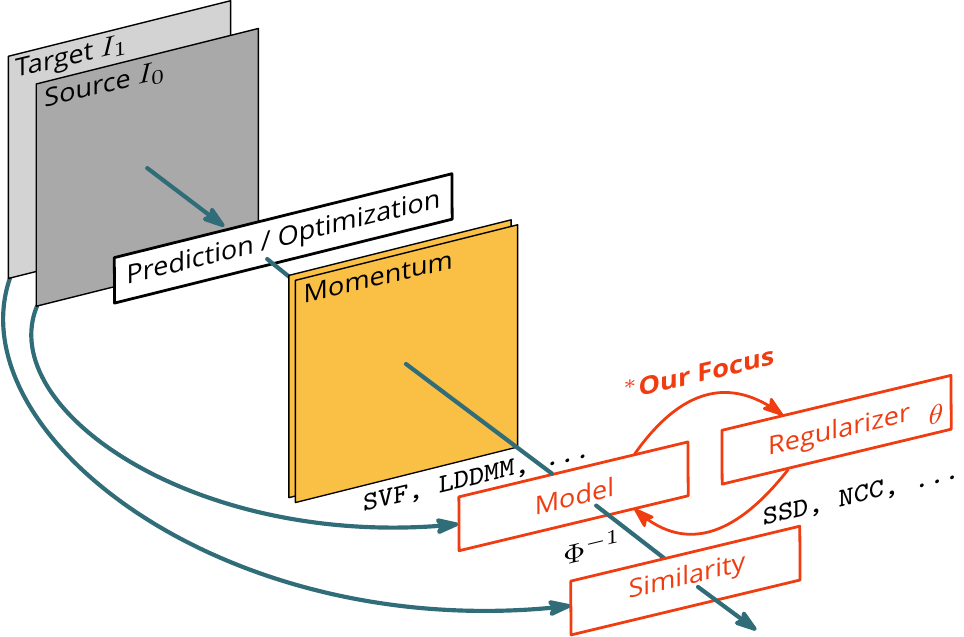}
              \caption{Architecture of our registration approach. We jointly optimize over the momentum, parameterizing the deformation $\Phi$, and the parameters, $\theta$, of a convolutional neural net (CNN). The CNN {\it locally} predicts multi-Gaussian kernel pre-weights which specify the regularizer. This approach constructs a metric such that diffeomorphic transformations can be assured in the continuum.} 
    \label{fig:overview}
    \vspace{-0.2cm}
\end{figure}

Existing non-parametric deformation models already yield good performance, are well understood, and use globally parameterized regularizers. Hence, we advocate building upon these models and to learn appropriate \emph{localized} parameterizations of the regularizer by leveraging large samples of training data. This strategy not only retains theoretical guarantees on deformation regularity, but also makes it possible to encode, in the metric, the intrinsic deformation model as supported by the data.


\noindent
    {\bf Contributions.} Our approach deviates from current approaches for (predictive) image registration in the following sense. Instead of replacing the entire registration model by a regression function, we retain the underlying registration model and \emph{learn} a spatially-varying regularizer. We build on top of a new \emph{vector momentum-parameterized stationary velocity field (vSVF)} registration model which allows us to guarantee that deformations are diffeomorphic even when using a learned regularizer. Our approach jointly optimizes the regularizer (parameterized by a deep network) and the registration parameters of the vSVF model. We show state-of-the art registration results and evidence for locally varying deformation models in real data.

\noindent
\textbf{Overview}.
Fig.~\ref{fig:overview} illustrates our key idea. We start with an initial momentum parameterization of a registration model, in particular, of the vSVF. Such a parameterization is important, because it allows control over deformation regularity \emph{on top of} the registration parameters. For a given source-target image-pair ($I_0$, $I_1$), we optimize over the momentum to obtain a spatial transformation $\Phi$ such that $I_0\circ\Phi^{-1}\approx I_1$. As the mapping from momentum to $\Phi$ is influenced by a regularizer expressing what transformations are desirable, we jointly optimize over the regularizer parameters, $\theta$, \emph{and} the momentum. Specifically, we use a spatially-adaptive regularizer, parameterized by a regression model (here, a CNN). Our approach naturally combines with a prediction model, \eg, \cite{yang2017quicksilver}, to obtain the momentum from a source-target image pair (avoiding optimization at runtime). Here, we \emph{numerically optimize} over the momentum for simplicity and leave momentum prediction to future work.

\noindent
\textbf{Organization}. In \S\ref{sec:background}, we review registration models, relations to our proposed approach and introduce the vSVF model. \S\ref{sec:metric_learning} describes our metric learning registration approach and \S\ref{sec:experiments} discusses experimental results. 
Finally, \S\ref{sec:discussion} summarizes the main points. \emph{Additional details can be found in the supplementary material.} 

\vspace{-0.25cm}

\section{Background on image registration}
\label{sec:background}

Image registration is typically formulated as an optimization problem of the form
\begin{equation}
  \gamma^* = \underset{\gamma}{\text{argmin}}~\lambda\ \text{Reg}[\Phi^{-1}(\gamma)] + \text{Sim}[I_0\circ\Phi^{-1}(\gamma),I_1].
\label{eqn:basicreg}
\end{equation}
Here, $\gamma$ parameterizes the deformation, $\Phi$, $\lambda\geq 0$, $\text{Reg}[\cdot]$ is a penalty encouraging spatially regular deformations and $\text{Sim}[\cdot,\cdot]$ penalizes dissimilarities between two images (\eg, sum-of-squared differences, cross-correlation or mutual information~\cite{hermosillo2002variational}). For low-dimensional parameterizations of $\Phi$, \eg, for affine or B-spline~\cite{rueckert1999nonrigid,modat2010fast} models, a regularizer may not be necessary. However, non-parametric registration models~\cite{modersitzki2004numerical} represent deformations via displacement, velocity, or momentum vector fields and require regularization for a well-posed optimization problem.

In medical image analysis, diffeomorphic transformations, $\Phi$, are often desirable to smoothly map between subjects or between subjects and an atlas space for local analyses. Diffeomorphisms can be guaranteed by estimating sufficiently smooth~\cite{dupuis1998} static or time-varying velocity fields, $v$. The transformation is then obtained via time integration, \ie, of $\Phi_t(x,t) = v\circ\Phi(x,t)$ (subscript $_t$ indicates a time derivative). Examples of such methods are the static velocity field (SVF)~\cite{vercauteren2009diffeomorphic} and the large displacement diffeomorphic metric mapping (LDDMM) registration models~\cite{beg2005,vialard2012diffeomorphic,hart2009optimal,avants2009advanced}. 

Non-parametric registration models require optimization over high-dimensional vector fields, often with millions of unknowns in 3D. Hence, numerical optimization can be slow. Recently, several approaches which learn a regression model to predict registration parameters from large sets of image pairs have emerged. Initial models based on deep learning~\cite{dosovitskiy2015flownet,ilg2017flownet} were proposed to speed-up optical flow computations~\cite{horn1981determining,beauchemin1995computation,brox2004high,borzi2003optimal,zach2007duality, sun2010secrets}. Non-deep-learning approaches for the regression of registration parameters have also been studied~\cite{wang2013joint,wang2015predict,chou20132d,cao2015semi,gutierrez2017guiding}. These approaches typically have no guarantees on spatial regularity or may not straightforwardly extend to 3D image volumes due to memory constraints. Alternative approaches have been proposed which can register 3D images~\cite{rohe2017svf,sokooti2017nonrigid,de2017end,hu2018label,balakrishnan2018unsupervised,fan2018adversarial} and assure diffeomorphisms~\cite{yang2016fast,yang2017quicksilver}. In these approaches, costly numerical optimization is only required during training of the regression model. Both end-to-end approaches~\cite{de2017end,hu2018label,balakrishnan2018unsupervised,fan2018adversarial} and approaches requiring the desired registration parameters during training exist~\cite{yang2016fast,yang2017quicksilver,rohe2017svf}. As end-to-end approaches differentiate through the transformation map, $\Phi$, they were motivated by the spatial transformer work~\cite{jaderberg2015spatial}.

One of the main conceptual downsides of current regression approaches is that they either explicitly encode regularity when computing the registration parameters to obtain the training data~\cite{yang2016fast,yang2017quicksilver,rohe2017svf}, impose regularity as part of the loss~\cite{hu2018label,balakrishnan2018unsupervised,fan2018adversarial} to avoid ill-posedness, or use low-dimensional parameterizations to assure regularity~\cite{sokooti2017nonrigid,de2017end}. Consequentially, these models \emph{do not} estimate a deformation model from data, but instead impose it by choosing a regularizer. Ideally, one would like a registration model which (1) regularizes according to deformations present in data, (2) is fast to compute via regression and which (3) retains desirable theoretical properties of the registration model (\eg, guarantees diffeomorphisms) even when predicting registration parameters via regression.

Approaches which predict momentum fields~\cite{yang2016fast,yang2017quicksilver} are fast and can guarantee diffeomorphisms. Yet, no model exists which estimates a local spatial regularizer of a form that guarantees diffeomorphic transformations and that can be combined with a fast regression formulation. Our goal is to close this gap via a momentum-based registration variant. While we will not explore regressing the momentum parameterization here, such a formulation is expected to be straightforward, as our proposed model has a momentum-parameterization similar to what has already been used successfully for regression with a deep network~\cite{yang2017quicksilver}.

\subsection{Fluid-type registration algorithms}
\label{subsection:fluid_registration}

To capture large deformations and to guarantee diffeomorphic transformations, registration methods inspired by fluid mechanics have been highly successful, \eg, in neuroimaging~\cite{avants2009advanced}. Our model follows this approach. The map $\Phi$ is obtained via time-integration of a sought-for velocity field $v(x,t)$. Specifically, $\Phi_t(x,t) = v(\Phi(x,t),t),~\Phi(x,0)=x$. For sufficiently smooth (\ie, sufficiently regularized) velocity fields, $v$, one obtains diffeomorphisms~\cite{dupuis1998}. The corresponding instance of Eq.~\eqref{eqn:basicreg} is
\begin{align*}
  v^* = &~\underset{v}{\text{argmin}}~\lambda \int_0^1 \|v\|_L^2~\mathrm{d}t + \text{Sim}[I_0\circ\Phi^{-1}(1),I_1],~\text{s.t.}\\
  & \Phi^{-1}_t + D\Phi^{-1}v=0,~\text{and}~\Phi^{-1}(0)=\text{id}\enspace.
\end{align*}
Here, $D$ denotes the Jacobian (of $\Phi^{-1}$), $\|v\|^2_L=\langle L^\dagger L v,v\rangle$ is a spatial norm defined using the differential operator $L$ and its adjoint $L^\dagger$. A specific $L$ implies an expected deformation model. In its simplest form, $L$ is \emph{spatially-invariant} and encodes a desired level of smoothness. As the vector-valued momentum, $m$, is given by $m=L^\dagger L v$, one can write the norm as $\|v\|_L^2 = \langle m,v\rangle$. 

In LDDMM~\cite{beg2005}, one seeks time-dependent vector fields $v(x,t)$. A simpler, but less expressive, approach is to use \emph{stationary velocity fields} (SVF), $v(x)$, instead~\cite{rohe2017svf}. While SVF's are  optimized directly over the velocity field $v$, we propose a \emph{vector momentum SVF (vSVF)} formulation, \ie, 
\begin{equation}
\begin{split}
    m^* = ~\underset{m_0}{\text{argmin}}~\lambda\langle m_0,v_0\rangle + \text{Sim}[I_0\circ\Phi^{-1}(1),I_1]\\
  ~\text{s.t.}~\Phi^{-1}_t + D\Phi^{-1}v=0\\
  ~\Phi^{-1}(0)=\text{id},~\text{and}~v_0=(L^\dagger L)^{-1}m_0\enspace,
  \label{eq:vsvf}
\end{split}
\end{equation}
which is optimized over the vector momentum $m_0$. vSVF is a simplification of vector momentum LDDMM~\cite{vialard2012diffeomorphic}. We use vSVF for simplicity, but our approach directly translates to LDDMM and is motivated by the desire for LDDMM regularizers adapting to a deforming image.

\section{Metric learning}
\label{sec:metric_learning}

In practice, $L$ is predominantly chosen to be spatially-invariant. Only limited work on \emph{spatially-varying} regularizers exists~\cite{risser2013piecewise,pace2013locally,stefanescu2004grid} and even less work focuses on \emph{estimating} a spatially-varying regularizer. A notable exception is the estimation of a spatially-varying regularizer in atlas-space~\cite{vialard2014spatially} which builds on a left-invariant variant of LDDMM~\cite{schmah2013left}. Instead, our goal is to \emph{learn} a spatially-varying regularizer which takes as inputs a momentum vector field and an image and computes a smoothed vector field. Therefore, our approach, not only leads to spatially varying metrics but can address pairwise registration, contrary to atlas-based learning methods, and it can adapt to deforming images during time integration for LDDMM\footnote{We use vSVF here and leave LDDMM as future work.}. We focus on extensions to the multi-Gaussian regularizer~\cite{risser2011simultaneous} as a first step, but note that learning more general regularization models would be possible.

\subsection{Parameterization of the metrics}
Metrics on vector fields of dimension $M$ are positive semi-definite (PSD) matrices of $M^2$ coefficients. Directly learning these $M^2$ coefficients is impractical, since for typical 3D image volumes $M$ is in the range of millions. We therefore restrict ourselves to a class of spatially-varying mixtures of Gaussian kernels.

\noindent
\textbf{Multi-Gaussian kernels.}
It is customary to directly specify 
the map from momentum to vector field via Gaussian smoothing, \ie, $v=G\star m$ (here, $\star$ denotes convolution). In practice, multi-Gaussian kernels are desirable~\cite{risser2011simultaneous} to capture multi-scale aspects of a deformation, where
\begin{equation}
  v=\left(\sum_{i=0}^{N-1} w_i G_i\right)\star m\enspace,~w_i\geq 0,~\sum_{i=0}^{N-1}w_i=1\enspace.
\label{eqn:mgkernel}
\end{equation}
$G_i$ is a normalized Gaussian centered at zero with standard deviation $\sigma_i$ and $w_i$ is a positive weight. The class of kernels that can be approximated by such a sum is already large\footnote{All the functions $h: \R_{>0} \mapsto \R$ such that $h(|x-y|)$ is a kernel on $\R^d$ for every $d \geq 1$ are in this class.}. 
A na\"ive approach to estimate the regularizer would be to learn $w_i$ and $\sigma_i$. However, estimating either the variances or weights benefits from adding penalty terms to encourage desired solutions. Assume, for simplicity, that we have a single Gaussian, $G$, $v=G\star m$, with standard deviation $\sigma$. As the Fourier transform is an $L^2$ isometry, we can write
\begin{multline}
  \int m(x)^\top v(x)~\mathrm{d}x = \langle m,v\rangle = \langle \hat{m},\hat{v}\rangle \\ = \langle \hat{v}/\hat{G},\hat{v}\rangle = \int e^{\pi^22\sigma^2 k^\top k}v(k)^
  \top v(k)~\mathrm{d}k\enspace,
\end{multline}
where $\hat{\cdot}$ denotes the Fourier transform and $k$ the frequency. Since $\hat{G}$ is a Gaussian without normalization constant, it follows that we need to explicitly penalize small $\sigma$'s if we want to favor smoother transformations (with large $\sigma$'s). 
Indeed, the previous formula shows that a constant velocity field has the same norm for every positive $\sigma$. More generally, in theory, it is possible to reproduce a given deformation by the use of different kernels. Therefore, a penalty function on the parameterizations of the kernel is desirable. We design this penalty via a simple form of \emph{optimal mass transport (OMT)} between the weights, as explained in the following.

\noindent
\textbf{OMT on multi-Gaussian kernel weights.}
Consider a multi-Gaussian kernel as in Eq.~\eqref{eqn:mgkernel}, with standard deviations $0<\sigma_0\leq \sigma_1 \leq \cdots \leq \sigma_{N-1}$. It would be desirable to obtain \emph{simple} transformations explaining deformations with large standard deviations. Interpreting the multi-Gaussian kernel weights as a distribution, the most desirable configuration would be $w_{i\neq N-1}=0,~w_{N-1}=1$, \ie, using only the Gaussian with largest variance. We want to penalize weight distributions deviating from this configuration, with the largest distance given to $w_0=1,~w_{i\neq 0}=0$. This can be achieved via an \emph{OMT penalty}. Specifically, we define this penalty on $w=[w_0,\ldots,w_{N-1}]$ as
\begin{equation}
  \text{OMT}(w) = \sum_{i=0}^{N-1}w_i\left|\log\frac{\sigma_{N-1}}{\sigma_i}\right|^r ,
\label{eqn:omtw}
\end{equation}
where $r\geq 1$ is a chosen power. In the following, we set $r=1$. This penalty is zero if $w_{N-1}=1$ and will have its largest value for $w_0=1$. It can be standardized as
\begin{equation}
  \widehat{\text{OMT}}(w) = \left|\log\frac{\sigma_{N-1}}{\sigma_0}\right|^{-r}\sum_{i=0}^{N-1}w_i\ \left|\log\frac{\sigma_{N-1}}{\sigma_i}\right|^r
\end{equation}
with $\widehat{\text{OMT}}(w)\in[0,1]$ by construction. 
\let\on=\operatorname

\vskip1ex
\noindent
\textbf{Localized smoothing.}
This multi-Gaussian approach is a \emph{global} regularization strategy, \ie, the \emph{same} multi-Gaussian kernel is applied \emph{everywhere}. This leads to efficient computations, but does not allow capturing localized changes in the deformation model. We therefore introduce {\it localized} multi-Gaussian kernels, embodying the idea of tissue-dependent localized regularization. Starting from a sum of kernels $\sum_{i = 0}^{N-1} w_i G_i$, we let the weights $w_i$ vary spatially, \ie, $w_i(x)$. To ensure diffeomorphic deformations, we set the weights $w_i(x) = G_{\sigma_{\text{small}}} \star \omega_i(x)$, where $\omega_i(x)$ are \emph{pre-weights} which are convolved with a Gaussian with small standard deviation. 
 An appropriate definition for how to use these weights to go from the momentum to the velocity is required to assure diffeomorphic transformations. Multiple approaches are possible. We use the model
\begin{equation}
\begin{split}
  v_0(x) & \eqdef ( K(w) \star m_0)(x)\\
  & = \sum_{i = 0}^{N-1} \sqrt{w_i(x)} \int_{y} G_i(| x - y |) \sqrt{w_i(y)} m_0(y) \on{d}\!y\,,\label{eq:sqrt_model}
\end{split}
\end{equation}
which, for spatially constant $w_i(x)$, reduces to the standard multi-Gaussian approach. 
In fact, this model guarantees diffeomorphisms, as long as the pre-weights are not too degenerate, as ensured by our model described hereafter. This fact is proven in the supplementary material (\ref{sec:sqrt_model}).
Motivated by the physical interpretation of these pre-weights and by diffeomorphic registration guarantees, we require a spatial regularization of these pre-weights via TV or $H^1$. We use color-TV \cite{blomgren1998color} for our experiments. As the spatial transformation is directly governed by the weights, we impose the OMT penalty locally. Based on Eq.~\eqref{eq:vsvf}, we optimize the following:
\begin{equation}
\begin{split}
  m^* = \underset{m_0}{\text{argmin}}~\lambda\langle m_0,v_0\rangle~+ \text{Sim}[I_0\circ\Phi^{-1}(1),I_1]~+\\
  ~\lambda_{\text{OMT}}\int \widehat{\text{OMT}}(w(x))~\mathrm{dx}~+\\ \lambda_{\text{TV}}\sqrt{\sum_{i=0}^{N-1}\left(\int \gamma(\|\nabla I_0(x)\|)\|\nabla \omega_i(x)\|_2~\mathrm{dx}\right)^2}\enspace,
\label{eqn:vsvf}
\end{split}
\end{equation}
subject to the constraints $\Phi^{-1}_t + D\Phi^{-1}v=0$ and $\Phi^{-1}(0)=\text{id}$; $\lambda_{\text{TV}},\lambda_{\text{OMT}}\geq 0$. 
The partition of unity defining the metric, intervenes in the $L^2$ scalar product $\langle m_0,v_0 \rangle$. 

Further, in Eq.~\eqref{eqn:vsvf}, the OMT penalty is integrated point-wise over the image-domain to support spatially-varying weights; $\gamma(x)\in\mathbb{R}^+$ is an 
\emph{edge indicator function}, \ie, 
$$\gamma(\|\nabla I\|)=(1+\alpha\|\nabla I\|)^{-1},~\text{with}~\alpha>0\enspace,$$ 
to encourage weight changes coinciding with image edges. 

\noindent
\textbf{Local regressor.}
To learn the regularizer, we propose a {\it local regressor} from the image and the momentum to the pre-weights of the multi-Gaussian. Given the momentum $m$ and image $I$ (the source image $I_0$ for vSVF; $I(t)$ at time $t$ for LDDMM) we learn a mapping of the form: 
$f_{\theta}:\mathbb{R}^d\times\mathbb{R}\to\Delta^{N-1}$ 
, where $\Delta^{N-1}$ is the $N-1$ unit/probability simplex\footnote{We only explore mappings dependent on the source image $I_0$ in our experiments, but more general mappings also depending on the momentum, for example, should be explored in future work.}. We will parametrize $f_{\theta}$ by a CNN in  
\S\ref{subsection:cnn_regressor}. The following attractive properties are worth pointing out:
\begin{itemize}[leftmargin=14pt]
\setlength\itemsep{-1pt}
\item[1)] The variance of the multi-Gaussian is bounded by the variances of its components. We retain these bounds and can therefore \emph{specify a desired regularity level}.
\item[2)] A globally smooth set of velocity fields is still computed (in Fourier space) which allows capturing large-scale regularity without a large receptive field of the local regressor. Hence, the CNN can be kept  efficient.
  \item[3)] The local regression strategy makes the approach suitable for more general registration models, \eg, for LDDMM, where one would like the regularizer to follow the \emph{deforming} source image over time.
\end{itemize}

\subsubsection{Learning the CNN regressor}
\label{subsection:cnn_regressor}

For simplicity we use a fairly shallow CNN with two layers of 
filters and leaky ReLU (\texttt{lReLU}) \cite{Maas13a} activations.
In detail, the data flow is as follows: 
\texttt{conv$(d+1,n_1)$} $\rightarrow$ \texttt{BatchNorm} $\rightarrow$ \texttt{lReLU} $\rightarrow$ \texttt{conv$(n_1,N)$} $\rightarrow$ \texttt{BatchNorm} $\rightarrow$ \texttt{weighted-linear-softmax}. Here \texttt{conv$(a,b)$} denotes a convolution layer with $a$ input channels and $b$ output feature maps. We used $n_1=20$ for our experiments and convolutional filters of spatial size $5$ ($5\times 5$ in 2D and $5\times 5\times 5$ in 3D). The \texttt{weighted-linear-softmax} activation function, which we formulated, maps inputs to $\Delta^{N-1}$. We designed it such that it operates around a setpoint of weights $w_i$ which correspond to the global weights of the multi-Gaussian kernel. This is useful to allow models to start training from a pre-specified, reasonable initial configuration of global weights, parameterizing the regularizer. Specifically, we define the {\it weighted linear softmax} $\sigma_w: \mathbb{R}^k \to \Delta^{N-1}$ as \begin{equation}
  \sigma_w(z)_j = \frac{\text{clamp}_{0,1}(w_j+z_j-\overline{z})}{\sum_{i=0}^{N-1} \text{clamp}_{0,1}(w_i+z_i-\overline{z})} \enspace,
  \label{eq:weighted_linear_softmax}
\end{equation}
where $\sigma_w(z)_j$ denotes the $j$-th component of the output, $\overline{z}$ is the mean of the inputs, $z$, and the clamp function clamps the values to the interval $[0,1]$. 
The removal of the mean in Eq.~\eqref{eq:weighted_linear_softmax} assures that one moves along the probability simplex. That is, if one is outside the clamping range, then 
$$\sum_{i=0}^{N-1} \text{clamp}_{0,1}(w_i+z_i-\overline{z}) = \sum_{i=0}^{N-1} w_i + z_i-\overline{z} = \sum_{i=0}^{N-1} w_i = 1$$
and consequentially, in this range, $\sigma_w(z)_j=w_j+z_j-\overline{z}$. This is linear in $z$ and moves along the tangent plane of the probability simplex by construction. 
As a CNN with small initial weights will produce an output close to zero, the output of $\sigma_w(z)$ will initially be close to the desired setpoint weights, $w_j$, of the multi-Gaussian kernel. 
Once the pre-weights, $\omega_i(x)$, have been obtained via this CNN, we compute multi-Gaussian weights via Gaussian smoothing. We use $\sigma=0.02$ in 2D and $\sigma=0.05$ in 3D throughout all experiments (\S\ref{sec:experiments}).

\subsection{Discretization, optimization, and training}
\label{subsec:discretization_optimization_training}

\noindent
{\bf Discretization.} We discretize the registration model using central differences for spatial derivatives and 20 steps in 2D (10 in 3D) of \nth{4} order Runge-Kutta integration in time. Gaussian smoothing is done in the Fourier domain. The entire model is implemented in \texttt{PyTorch}\footnote{Available at \url{https://github.com/uncbiag/registration}, also including various other registration models such as LDDMM.}; all gradients are computed by automatic differentiation \cite{Paszke17a}.

\noindent
    {\bf Optimization.} Joint optimization over the momenta of a set of registration pairs and the network parameters is difficult in 3D due to GPU memory limitations. Hence, we use a customized variant of stochastic gradient descent (SGD) with Nesterov momentum ($0.9$) \cite{Sutskever13a}, where we split optimization variables (1) that are {\it shared} and (2) {\it individual} between registration-pairs. Shared parameters are for the CNN. Individual parameters are the momenta. Shared parameters are kept in memory and individual parameters, including their current optimizer states, are saved and restored for every random batch. We use a batch-size of $2$ in 3D and $100$ in 2D and perform 5 SGD steps for each batch. 
    Learning rates are $1.0$ and $0.25$ for the individual and the shared parameters in 3D and $0.1$ and $0.025$ in 2D, respectively. We use gradient clipping (at a norm of one, separately for the gradients of the shared and the individual parameters) to help balance the energy terms. We use \texttt{PyTorch}'s {\tt ReduceLROnPlateau} learning rate scheduler with a reduction factor of 0.5 and a patience of 10 to adapt the learning rate during training. 

    \noindent {\bf Curriculum strategy:} Optimizing \emph{jointly} over momenta, global multi-Gaussian weights and the CNN does not work well in practice. Instead, we train in two stages: (1) In the initial global stage, we pick a reasonable set of global Gaussian weights and optimize only over the momenta. This allows further optimization from a reasonable starting point. Local adaptations (via the CNN) can then immediately capture local effects rather than initially being influenced by large misregistrations. In all  experiments, we chose these global weights to be linear with respect to their associated variances, \ie, $w_i = \sigma_i^2/(\sum_{j=0}^{N-1}\sigma_j^2)$. Then, (2) starting from the result of (1), we optimize over the momenta \emph{and} the parameters of the CNN to obtain spatially-localized weights. We refer to stages (1) and (2) as \emph{global} and \emph{local} optimization, respectively. 
      In 2D, we run global/local optimization for 50/100 epochs. In 3D, we run for 25/50 epochs. Gaussian variances are set to $\{0.01,0.05,0.1,0.2\}$ for images in $[0,1]^d$. We use normalized cross correlation (NCC) with $\sigma=0.1$ as similarity measure. See \S\ref{sec:implementation_details} of the supplementary material for further implementation details.
\vspace{-0.1cm}

\section{Experiments}
\label{sec:experiments}

We tested our approach on three dataset types: (1) 2D synthetic data with known ground truth (\S\ref{subsec:synthetic_experiment}), (2) 2D slices of a real 3D brain magnetic resonance (MR) images (\S\ref{subsec:real_2d_experiment}), and (3) multiple 3D datasets of brain MRIs (\S\ref{subsec:real_3d_experiment}). Images are first affinely aligned and intensity standardized by matching their intensity quantile functions to the average quantile function over all datasets. We compute deformations at half the spatial resolution in 2D  ($0.4$ times in 3D) and upsample $\Phi^{-1}$ to the original resolution when evaluating the similarity measure so that fine image details can be considered. This is not necessary in 2D, but essential in 3D to reduce GPU memory requirements. We use this approach in 2D for consistency. 

All evaluations (except for 
\S\ref{subsec:real_2d_experiment} and for the within dataset results of \S\ref{subsec:real_3d_experiment}) are with respect to a separate testing set. For testing, the previously learned regularizer parameters are fixed and numerical optimization is over momenta only (in particular, 250/500 iterations in 2D and 150/300 in 3D for global/local optimization).
  
\subsection{Results on 2D synthetic data}
\label{subsec:synthetic_experiment}

We created 300 synthetic  $128 \times 128$ image pairs of randomly deformed concentric rings (see supplementary material, \S\ref{sec:synthetic_experiment_setup}). Shown results are on 100 separate test cases.

Fig.~\ref{fig:synth_example_results_images} shows registrations for $\lambda_{\text{OMT}}\in\{15,50,100\}$. The TV penalty was set to $\lambda_{\text{TV}}=0.1$. 
The estimated standard deviations, $\sigma^2(x)=\sum_{i=0}^{N-1}w_i(x)\sigma_i^2$, capture the trend of the ground truth, showing a large standard deviation (\ie, high regularity) in the background and the center of the image and a smaller standard deviation in the outer ring. The standard deviations are stable across OMT penalties, but show slight increases with higher OMT values. Similarly, deformations get progressively more regular with larger OMT penalties (as they are regularized more strongly), but visually all registration results show very similar good correspondence.
Note that while TV was used to train the model, the CNN output is not explicitly TV regularized, but nevertheless is able to produce largely constant regions that are well aligned with the boundaries of the source image. Fig.~\ref{fig:synth_example_results_weights} shows the corresponding estimated weights. They are stable for a wide range of OMT penalties.

\begin{figure}[t!]
\centering{
\includegraphics[width=\columnwidth]{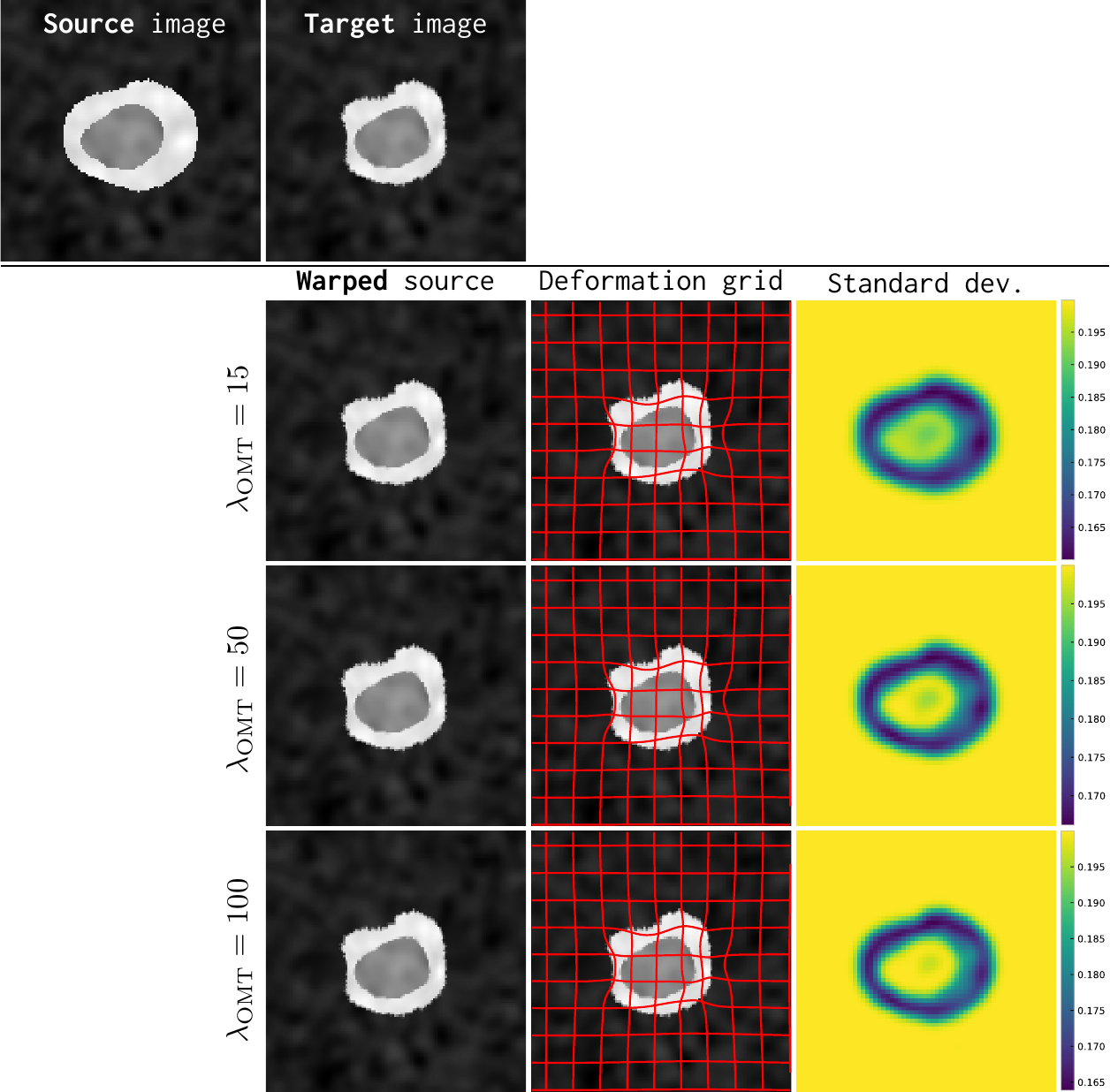}}
  \caption{Example registration results using local metric optimization for the synthetic test data. Results are shown for different values of $\lambda_{\text{OMT}}$ with the total variation penalty fixed to $\lambda_{\text{TV}}=0.1$. Visual correspondence between the warped source and the target images are high for all settings. Estimates for the standard deviation stay largely stable. However, deformations are slightly more regularized for higher OMT penalties. This can also be seen based on the standard deviations (\emph{best viewed zoomed}).}
  \label{fig:synth_example_results_images}
\end{figure}

\begin{figure}
\centering{
\includegraphics[width=\columnwidth]{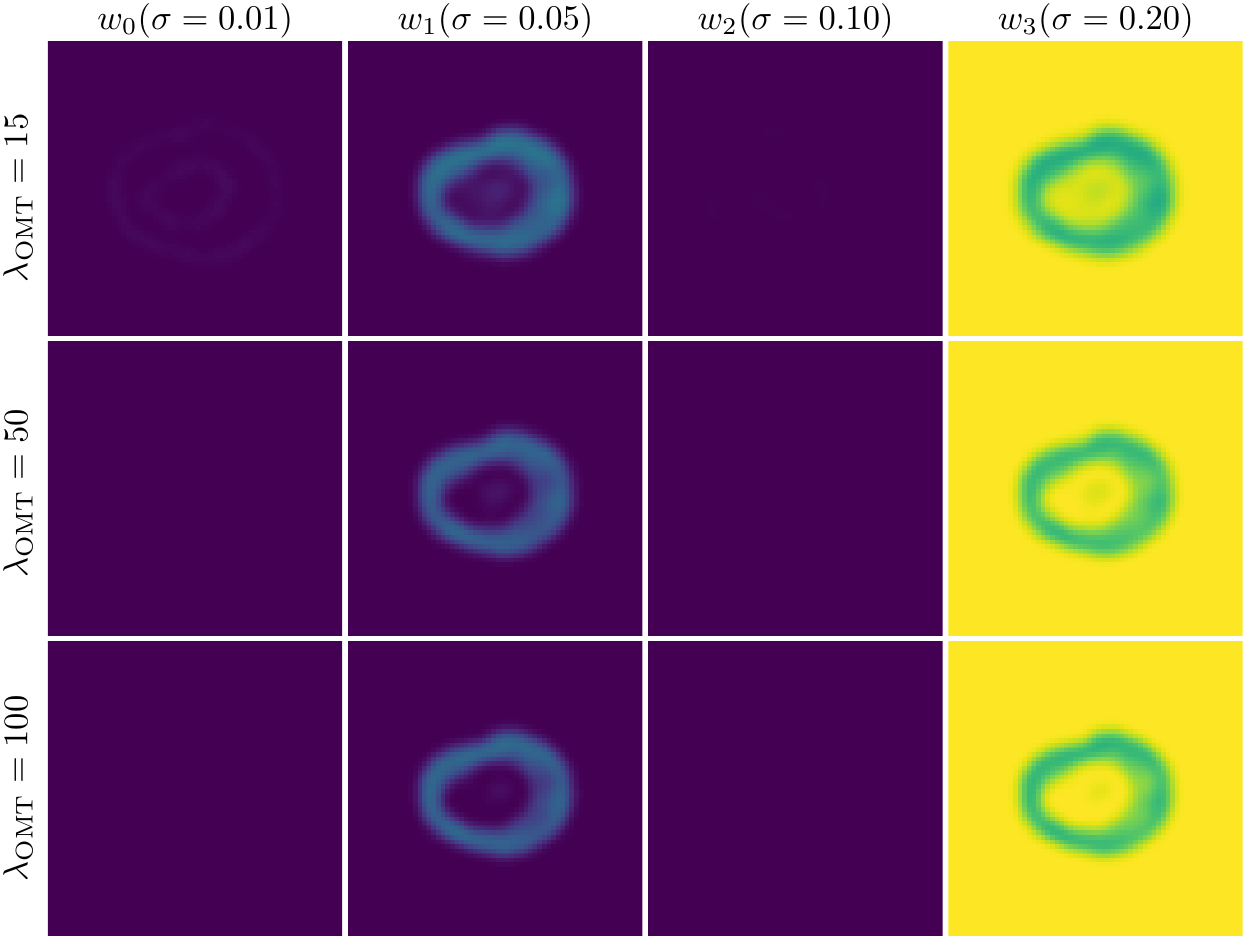}}
\caption{Estimated multi-Gaussian weights (blue=0; yellow=1) for the registrations in Fig.~\ref{fig:synth_example_results_images} \wrt~different $\lambda_{\text{OMT}}$'s.
Weight estimates are very stable across $\lambda_{\text{OMT}}$. While the overall standard deviation (Fig.~\ref{fig:synth_example_results_images}) approximates the ground truth, the weights for the outer ring differ (ground truth weights are $[0.05, 0.55, 0.3, 0.1]$) from the ground truth. They approximately match for the background and the interior (ground truth $[0,0,0,1]$).}
  \label{fig:synth_example_results_weights}
\end{figure}

\begin{figure}
\centering{
\includegraphics[width=\columnwidth]{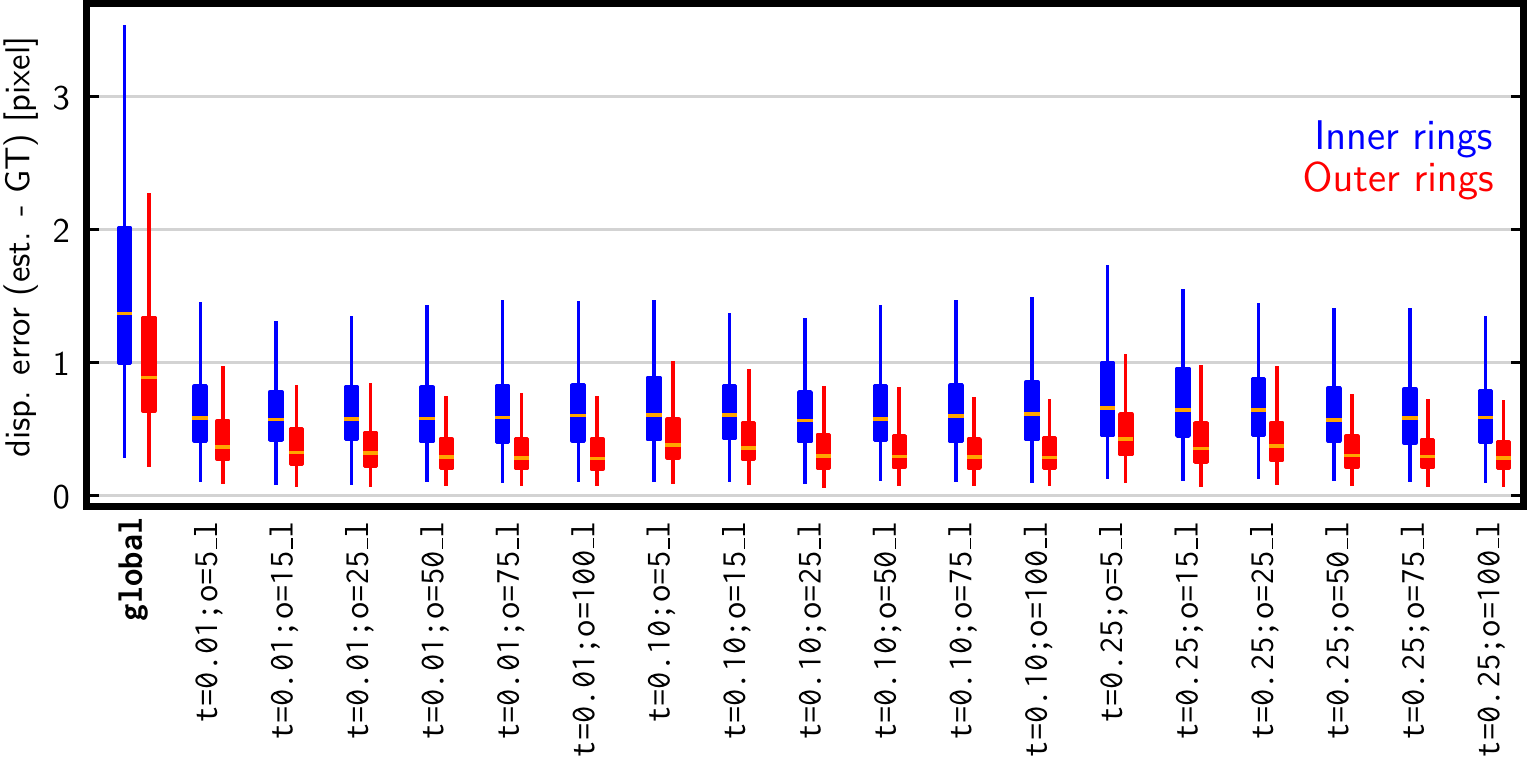}}
  \caption{\emph{Displacement error} (in pixel) with respect to the ground truth (GT) for various values of the total variation penalty, $\lambda_{\text{TV}}$ (\texttt{t}) and the OMT penalty, $\lambda_{\text{OMT}}$ (\texttt{o}). Results for the \textcolor{blue}{inner} and the \textcolor{red}{outer} rings show subpixel registration accuracy for all \emph{local} metric optimization results (\texttt{*\_l}). Overall, local metric optimization substantially improves registrations over the results obtained via initial global multi-Gaussian regularization (\texttt{global}).
    \label{fig:displacement_errors_within_shape}}
\end{figure}

Finally, Fig.~\ref{fig:displacement_errors_within_shape} shows displacement errors relative to the ground truth deformation for the interior and the exterior ring of the shapes. Local metric optimization significantly improves registration (over initial global multi-Gaussian regularization); these results are stable across a wide range of penalties with median displacement errors $<1$ pixel.

\subsection{Results on real 2D data}
\label{subsec:real_2d_experiment}

We used the same settings as for the synthetic dataset. However, here our results are for 300 random registration pairs of axial slices of the LPBA40 dataset~\cite{klein2009}. 

Fig.~\ref{fig:real_example_results_images} shows results for $\lambda_{\text{OMT}}\in\{15,50,100\}$; $\lambda_{\text{TV}}=0.1$. Larger OMT penalties yield larger standard deviations and consequentially more regular deformations. Most regions show large standard deviations (high regularity), but lower values around the ventricles and the brain boundary -- areas which may require substantial deformations. 

Fig.~\ref{fig:real_example_results_weights} shows the corresponding estimated weights. We have no ground truth here, but observe that the model produces consistent regularization patterns for all shown OMT values (\{15,50,100\}) and allocates almost all weights to the Gaussians with the lowest and the highest standard deviations, respectively. As $\lambda_{\text{OMT}}$ increases, more weight shifts from the smallest to the largest Gaussian.

\begin{figure}[t!]
\centering{
\includegraphics[width=\columnwidth]{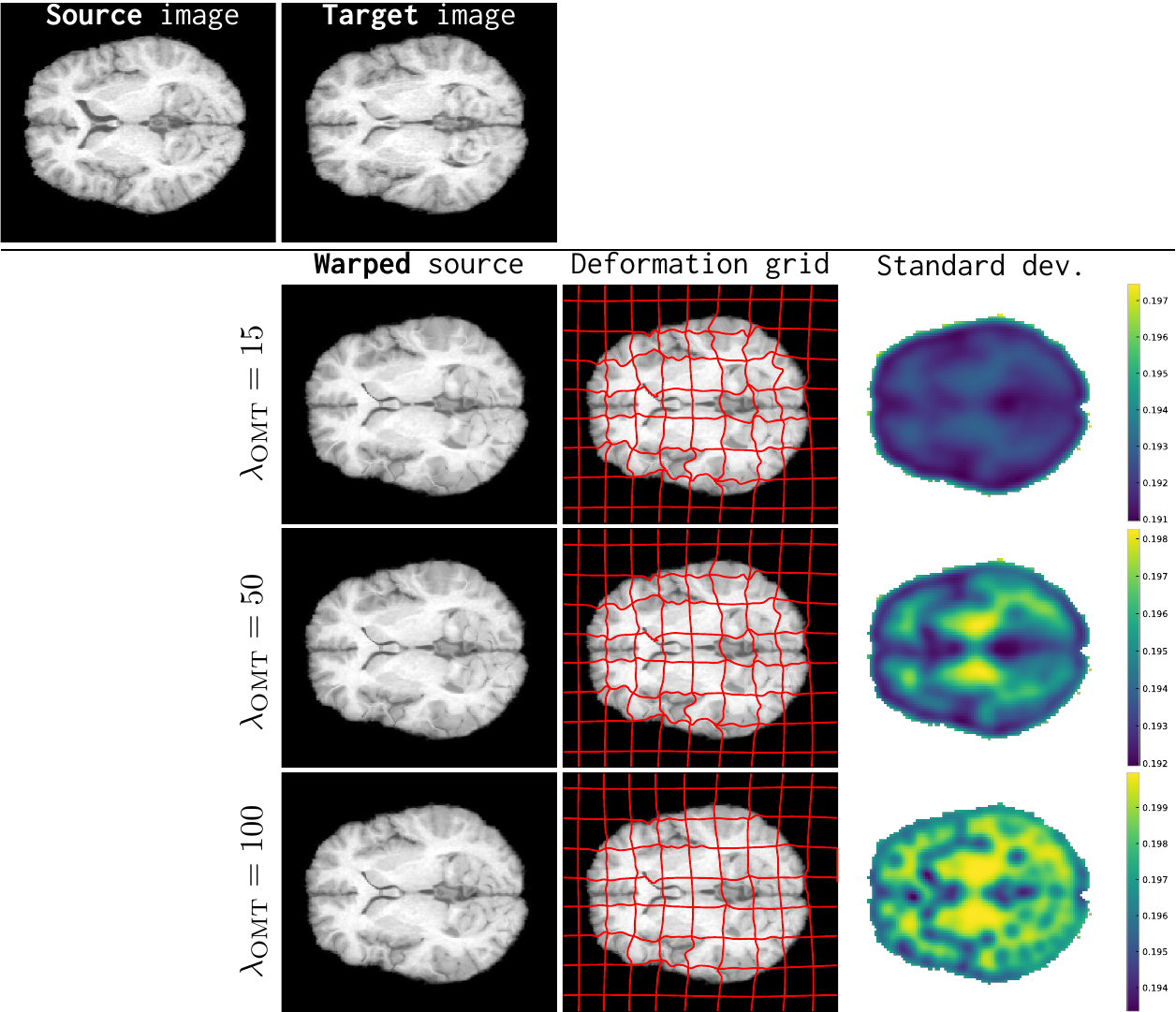}}
\caption{Example registration results using local metric optimization for different $\lambda_{\text{OMT}}$'s and $\lambda_{\text{TV}}=0.1$. Visual correspondences between the warped source images and the target image are high for all values of the OMT penalty. Standard deviation estimates capture the variability of the ventricles and increased regularity with increased values for $\lambda_{\text{OMT}}$ (\emph{best viewed zoomed}).}
  \label{fig:real_example_results_images}
  \vspace{-0.3cm}
\end{figure}

\begin{figure}[t!]
\centering{
\includegraphics[width=\columnwidth]{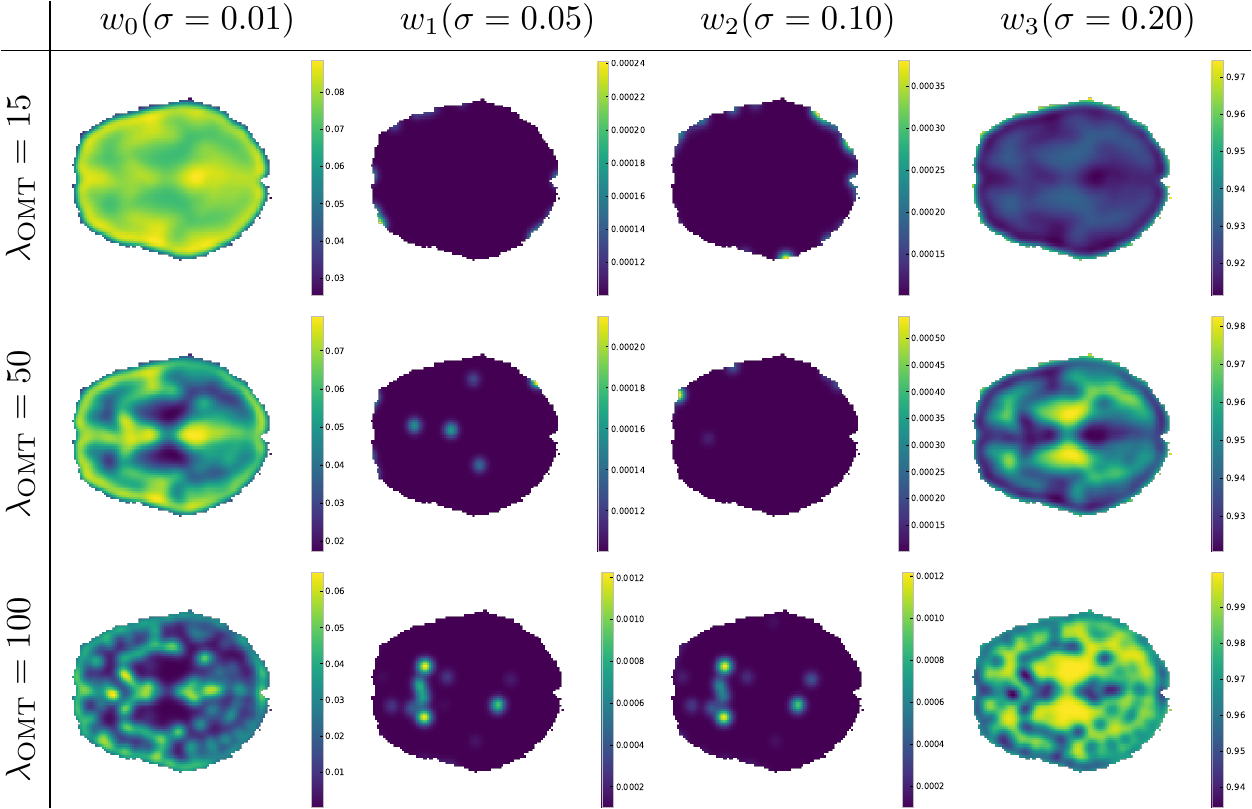}}
  \caption{Estimated multi-Gaussian weights for different $\lambda_{\text{OMT}}$ for real 2D data. Weights are mostly allocated to the Gaussian with the largest standard deviation (see colorbars; best viewed zoomed). A shift from $w_0$ to $w_3$ can be observed for larger values of $\lambda_{\text{OMT}}$. While weights shift between OMT setting, the ventricle area is always associated with more weight on $w_0$ (\emph{best viewed zoomed}).} 
  \label{fig:real_example_results_weights}
  \vspace{-0.5cm}
\end{figure}

\subsection{Results on real 3D data} 
\label{subsec:real_3d_experiment}

We experimented using the 3D CUMC12, MGH10, and IBSR18 datasets~\cite{klein2009}. These datasets contain 12, 10, and 18 images. \emph{Registration evaluations are with respect to all 132 registration pairs of CUMC12}. We use $\lambda_{\text{OMT}}=50$, $\lambda_{\text{TV}}=0.1$ for all tests\footnote{We did not tune these parameters and better settings may be possible.}. Once the regularizer has been learned, we keep it fixed and optimize for the vSVF vector momentum. We trained independent models on CUMC12, MGH10, and IBSR18 using 132 image pairs on CUMC12, 90 image pairs on MGH10, and a random set of 150 image pairs on IBSR18. We tested the resulting three models on CUMC12 to assess the performance of a dataset-specific model and the ability to transfer models across datasets.

Tab.~\ref{tab:target_overlap_3d_cumc12} and Fig.~\ref{fig:boxplot_overlap_3d_test_cumc12} compare to the registration methods in~\cite{klein2009} and across different stages of our approach for different training/testing pairs. We also list the 
performance of the most recent VoxelMorph (\texttt{VM}) variant \cite{Dalca18a}. We kept the original architecture configuration, swept over a selection of VoxelMorph's hyperparameters and report the best results here. Each VoxelMorph model was trained for 300 epochs which, in our
experiments, was sufficient for convergence. Overall, our approach shows the best results among all models when trained/tested on CUMC12 (\texttt{c/c local}); though results are not significantly better than for SyN, SPM5D, and VoxelMorph. Local metric optimization shows strong improvements over initial global multi-Gaussian regularization. Models trained on MGH10 and IBSR18 (\texttt{m/c local} and \texttt{i/c local}) also show good performance, slightly lower than for the model trained on CUMC12 itself, but higher than all other competing methods. This indicates that the trained models transfer well across datasets. While the top competitor in terms of median overlap (SPM5D) produces outliers (cf. Fig.~\ref{fig:boxplot_overlap_3d_test_cumc12}), our models do not. In case of VoxelMorph we observed that adding more training pairs (\ie, using all pairs of IBSR18, MGH18 \& LBPA40)  did not improve results (\emph{cf.} Tab.~\ref{tab:target_overlap_3d_cumc12} \texttt{*/c VM}). 

In Tab.~\ref{tab:jacobian_across_stages_cumc12_3d}, we list statistics for the determinant of the Jacobian of $\Phi^{-1}$ on CUMC12, where the model was also trained on. This illustrates how transformation regularity changes between the global and the local regularization approaches. As expected, the initial global multi-Gaussian regularization results in highly regular registrations (\ie, determinant of Jacobian close to one). Local metric optimization achieves significantly improved target volume overlap measures (Fig.~\ref{fig:boxplot_overlap_3d_test_cumc12}) while keeping good spatial regularity, clearly showing the utility of our local regularization model. Note that all reported determinant of Jacobian values in Tab.~\ref{tab:jacobian_across_stages_cumc12_3d} are positive, indicating no foldings, which is consistent with our diffeomorphic guarantees; though these are only guarantees for the continuous model at convergence, which do not consider potential discretization artifacts.

\renewcommand{\tabcolsep}{3pt}

\begin{table}
\begin{tiny}
\centering{
    \scalebox{1.09}{
		\begin{tabular}{| l | c | c | c | c | c | c | c | c | c | c |}
			\hline
			\textbf{Method} & \textbf{mean} & \textbf{std} & \textbf{1\%} & \textbf{5\%} & \textbf{50\%} & \textbf{95\%} & \textbf{99\%} & p & MW-stat & sig? \\
			\hline
			 \texttt{FLIRT} & 0.394 & 0.031 & 0.334 & 0.345 & 0.396 & 0.442 & 0.463 & \textless\num{1e-10} & 17394.0 & \cmark \\
			  \texttt{AIR} & 0.423 & 0.030 & 0.362 & 0.377 & 0.421 & 0.483 & 0.492 & \textless\num{1e-10} & 17091.0 & \cmark \\
			  \texttt{ANIMAL} & 0.426 & 0.037 & 0.328 & 0.367 & 0.425 & 0.483 & 0.498 & \textless\num{1e-10} & 16925.0 & \cmark \\
			  \texttt{ART} & 0.503 & 0.031 & 0.446 & 0.452 & 0.506 & 0.556 & 0.563 & \textless\num{1e-4} & 11177.0 & \cmark \\
			  \texttt{Demons} & 0.462 &  \cellcolor{green!30}{\bf 0.029} & 0.407 & 0.421 & 0.461 & 0.510 & 0.531 & \textless\num{1e-10} & 15518.0 & \cmark \\
			  \texttt{FNIRT} & 0.463 & 0.036 & 0.381 & 0.410 & 0.463 & 0.519 & 0.537 & \textless\num{1e-10} & 15149.0 & \cmark \\
			  \texttt{Fluid} & 0.462 & 0.031 & 0.401 & 0.410 & 0.462 & 0.516 & 0.532 & \textless\num{1e-10} & 15503.0 & \cmark \\
			  \texttt{SICLE} & 0.419 & 0.044 & 0.300 & 0.330 & 0.424 & 0.475 & 0.504 & \textless\num{1e-10} & 17022.0 & \cmark \\
			  \texttt{SyN} & 0.514 & 0.033 & 0.454 & 0.460 & 0.515 & 0.565 & 0.578 & 0.073 & 9677.0 & \xmark \\
			  \texttt{SPM5N8} & 0.365 & 0.045 & 0.257 & 0.293 & 0.370 & 0.426 & 0.455 & \textless\num{1e-10} & 17418.0 & \cmark \\
			  \texttt{SPM5N} & 0.420 & 0.031 & 0.361 & 0.376 & 0.418 & 0.471 & 0.494 & \textless\num{1e-10} & 17160.0 & \cmark \\
			  \texttt{SPM5U} & 0.438 & 0.029 & 0.373 & 0.394 & 0.437 & 0.489 & 0.502 & \textless\num{1e-10} & 16773.0 & \cmark \\
			  \texttt{SPM5D} & 0.512 & 0.056 & 0.262 & 0.445 & 0.523 & 0.570 & 0.579 & 0.311 & 9043.0 & \xmark \\ \hline\hline
			  \texttt{c/c VM} & 0.517 & 0.034 &  \cellcolor{green!30}{\bf 0.456} & 0.460 & 0.518 & 0.571 & 0.580 & 0.244 & 9211.0 & \xmark \\
			  \texttt{m/c VM} & 0.510 & 0.034 & 0.448 & 0.453 & 0.509 & 0.564 & 0.574 & 0.011 & 10197.0 & \cmark \\
			  \texttt{i/c VM} & 0.510 & 0.034 & 0.450 & 0.453 & 0.508 & 0.564 & 0.573 & 0.012 & 10170.0 & \cmark \\
			  \texttt{*/c VM} & 0.509 & 0.033 & 0.450 & 0.453 & 0.509 & 0.561 & 0.570 & 0.007 & 10318.0 & \cmark \\ \hline\hline
			  \texttt{m/c global} & 0.480 & 0.031 & 0.421 & 0.430 & 0.482 & 0.530 & 0.543 & \textless\num{1e-10} & 13864.0 & \cmark \\
			  \texttt{m/c local} & 0.517 & 0.034 & 0.454 & 0.461 & 0.521 & 0.568 & 0.578 & 0.257 & 9163.0 & \xmark \\ \hline\hline
			  \texttt{c/c global} & 0.480 & 0.031 & 0.421 & 0.430 & 0.482 & 0.530 & 0.543 & \textless\num{1e-10} & 13864.0 & \cmark \\
			  \texttt{c/c local} &  \cellcolor{green!30}{\bf 0.520} & 0.034 & 0.455 &  \cellcolor{green!30}{\bf 0.463} &  \cellcolor{green!30}{\bf 0.524} &  \cellcolor{green!30}{\bf 0.572} &  \cellcolor{green!30}{\bf 0.581} & - & - & - \\ \hline\hline
			  \texttt{i/c global} & 0.480 & 0.031 & 0.421 & 0.430 & 0.482 & 0.530 & 0.543 & \textless\num{1e-10} & 13863.0 & \cmark \\
			  \texttt{i/c local} & 0.518 & 0.035 & 0.454 & 0.460 & 0.522 & 0.571 & 0.581 & 0.338 & 8972.0 & \xmark \\
			\hline
		\end{tabular}
}}
  \caption{Statistics for mean (over all labeled brain structures, disregarding the background) target overlap ratios on CUMC12 for different methods. Prefixes for results based on global and local regularization indicate training/testing combinations identified by first initials of the datasets. For example, \texttt{m/c} means trained/tested on MGH10/CUMC12. Statistical results are for the null-hypothesis of equivalent mean target overlap with respect to \texttt{c/c local}. Rejection of the null-hypothesis (at $\alpha=0.05$) is indicated with a check-mark (\cmark). All $p$-values are computed using a paired one-sided Mann Whitney rank test~\cite{mann1947} and corrected for multiple comparisons using the Benjamini-Hochberg~\cite{benjamini1995} procedure with a family-wise error rate of $0.05$. Best results are \textbf{bold}, showing that our methods exhibits state-of-the-art performance.}
        \label{tab:target_overlap_3d_cumc12}
\end{tiny}
\end{table}

\begin{figure}
  \centering{
  \includegraphics[width=0.97\columnwidth]{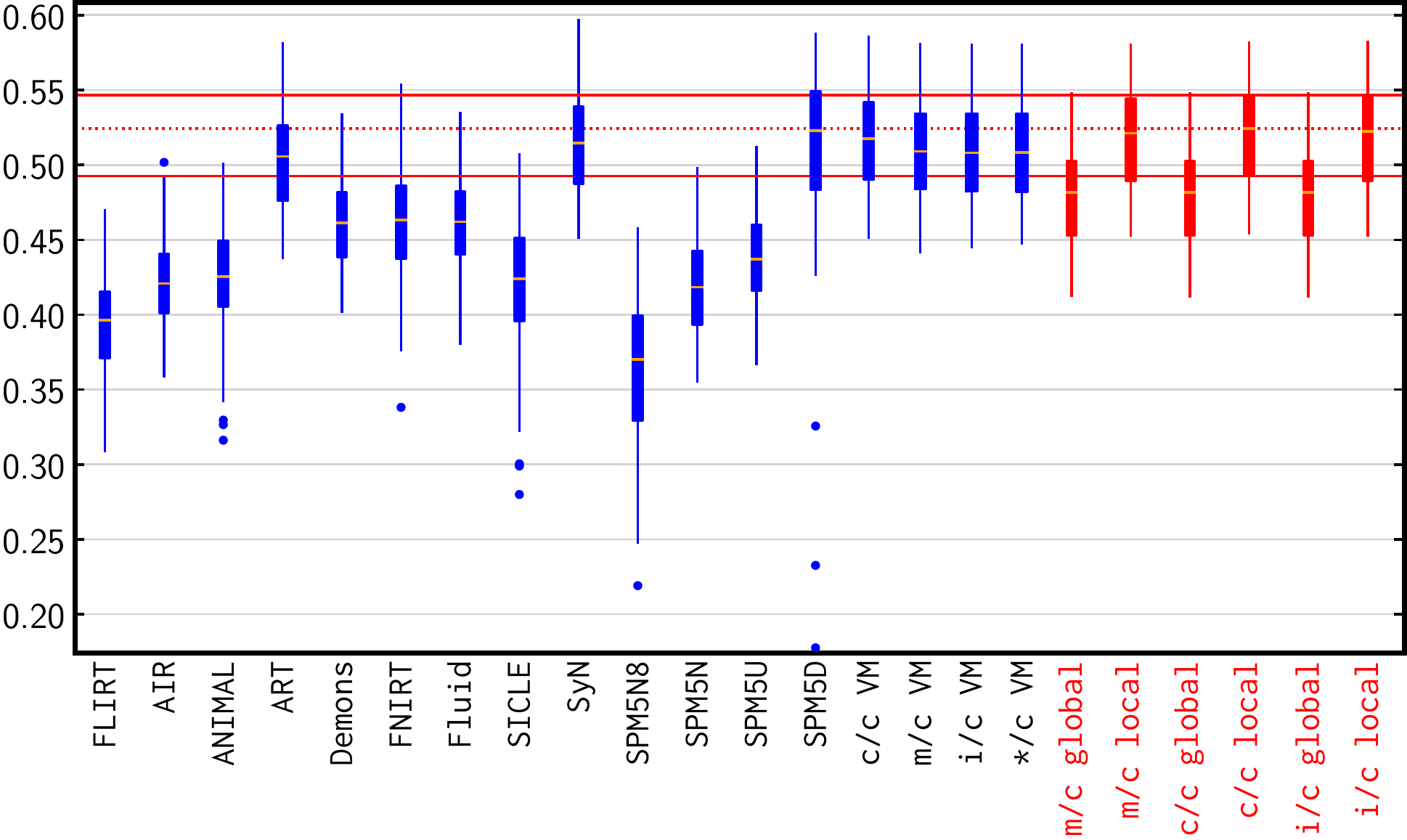}}
  
  \caption{Mean target overlap ratios on CUMC12 (in 3D) with $\lambda_{\text{TV}}=0.1$ and $\lambda_{\text{OMT}}=50$. Our approach (marked \textcolor{red}{red}) gives the best result overall. Local metric optimization greatly improves results over the initial global multi-Gaussian regularization. Best results are achieved for the model that was trained on this dataset (\texttt{c/c local}), but models trained on MGH10 (\texttt{m/c local}) and on IBSR18 (\texttt{i/c local}) transfer well and show almost the same level of performance.
    The dashed line is the median mean target overlap ratio (\ie, mean over all labels, median over all registration pairs).}
  \label{fig:boxplot_overlap_3d_test_cumc12}
  \vspace{-0.3cm}
\end{figure}

\begin{table}
  \centering{
  \begin{scriptsize}
    \begin{tabular}{|l|c|c|c|c|c|c|}
\hline
     & \textbf{mean} & \textbf{1\%} & \textbf{5\%} & \textbf{50\%} & \textbf{95\%} & \textbf{99\%} \\
     \hline
\hline
     \textbf{Global}& 1.00(0.02) & 0.60(0.07) & 0.71(0.03) & 0.98(0.03) & 1.39(0.05) & 1.69(0.14) \\
     \textbf{Local}& 0.98(0.02) & 0.05(0.04) & 0.24(0.03) & 0.84(0.03) & 2.18(0.07) & 3.90(0.23) \\
      \hline
    \end{tabular}
\end{scriptsize}}
  \caption{Mean (standard deviation) of \emph{determinant of Jacobian} of $\Phi^{-1}$ for global and local regularization with $\lambda_{\text{TV}}=0.1$ and $\lambda_{\text{OMT}}=50$ for CUMC12 within the brain. Local metric optimization (local) improves target overlap measures (see Fig.~\ref{fig:boxplot_overlap_3d_test_cumc12}) at the cost of less regular deformations than for global multi-Gaussian regularization. However, the reported determinants of Jacobian are still all positive, indicating no folding.}
  \label{tab:jacobian_across_stages_cumc12_3d}
  \vspace{-0.3cm}
 \end{table}

\vspace{-0.2cm}
\section{Conclusions}
\label{sec:discussion}

We proposed an approach to learn a \emph{local} regularizer, parameterized by a CNN, which integrates with deformable registration models and demonstrates good performance on both synthetic and real data.
While we used vSVF for computational efficiency, our approach could directly be integrated with LDDMM (resulting in local, time-varying regularization). It could also be integrated with predictive registration approaches, \eg, \cite{yang2017quicksilver}. Such an integration would remove the computational burden of optimization at runtime, yield a fast registration model, allow end-to-end training and, in particular, promises to overcome the two key issues of current deep learning approaches to deformable image registration: (1) the lack of control over spatial regularity of approaches training mostly based on image similarities and (2) the inherent limitation on registration performance by approaches which try to learn optimal registration parameters for a given registration method and a {\it chosen} regularizer. 

\vskip0.5ex
To the best of our knowledge, our model is the first approach to learn a local regularizer of a registration model by predicting local multi-Gaussian pre-weights. This is an attractive approach as it (1) allows retaining the theoretical properties of an underlying (well-understood) registration model, (2) allows imposing bounds on local regularity, and (3) focuses the effort on learning some aspects of the registration model from data, while refraining from learning the {\it entire} model which is inherently ill-posed. The estimated local regularizer might provide useful information in of itself and, in particular, indicates that a spatially non-uniform deformation model is supported by real data. 

Much experimental and theoretical work remains. More sophisticated CNN models should be explored; the method should be adapted for fast end-to-end regression; more general parameterizations of regularizers should be studied (\eg, allowing sliding), and the approach should be developed for LDDMM.

\noindent {\bf Acknowledgements.} This work was supported by grants NSF EECS-1711776, NIH 1-R01-AR072013 and the 
Austrian Science Fund (FWF project P 31799).

\clearpage
\bibliographystyle{ieee}
\bibliography{ms}

\clearpage
\setcounter{page}{1}
\appendix
\section{Supplementary material}

This supplementary material contains additional information describing our approach. \S\ref{sec:sqrt_model} discusses the theoretical properties of our model and proves that the resulting spatial transformations are diffeomorphic in the continuum. Possible undesirable effects of the numerical discretization are not studied or addressed in this work. \S\ref{sec:implementation_details} provides some critical implementation details for the CNN regressing the local pre-weights of the multi-Gaussian regularizer based on an input image. Lastly, \S\ref{sec:synthetic_experiment_setup} provides details on how the synthetic data for our synthetic experiments was created.

\subsection{Localized multi-Gaussian kernels}
\label{sec:sqrt_model}

Starting from a sum of kernels 
$\sum_{i = 0}^{N-1} w_i G_i$, we let the coefficient $w_i$ be spatially varying. In order to ensure the diffeomorphic property of deformations, we set the weights $w_i(x) = G_{\sigma_{\text{small}}} \star \omega_i(x) + \varepsilon_i$, where $\omega_i(x)$ are pre-weights which are convolved with a Gaussian filter with small standard deviation and $\varepsilon_i$ is a small positive real that acts as a constant offset parameter\footnote{We enforce this small positive constant by clamping the pre-weights to $[\epsilon,1]$. One could also directly integrate this into the weighted linear softmax definition by clamping to $[\epsilon,1]$ instead of $[0,1]$.}. We have
\begin{multline}\label{EqCriteria}
\on{Reg}_{\text{vSVF}} = \lambda \langle m_0,v_0 \rangle + \lambda_{\text{OMT}}\int \on{OMT}(w(x)) \ud x~+ \\ \lambda_{\text{TV}}\sqrt{\sum_{i=0}^{N-1}\left(\int \gamma(\| \nabla I_0(x)\|) \| \nabla \omega_i(x)\|_{2} \ud x\right)^2}\,,
\end{multline}
where $m_0$ and $v_0$ are the initial momentum and vector field, respectively. Note that the partition of unity defining the metric, intervenes in the $L^2$ scalar product $\langle m_0,v_0 \rangle$ since, with $\varepsilon_i > 0$ a positive offset,
\begin{multline}\label{EqNewSmoothing}
v_0(x) =( K(w) \star m_0)(x) \\= \sum_{i = 0}^{N-1} \sqrt{w_i(x)} \int_{y} G_i(| x - y |) \sqrt{w_i(y)} m_0(y) \on{d}\!y\,,
\end{multline}
whose spatial smoothness is enough to guarantee the deformation to be diffeomorphic.
Due to the convolution of the pre-weights, the vector field $v_0$ has a bounded norm in the space of $C^1$ vector fields which implies that its flow is a diffeomorphism at every time.
In fact, we have:
\begin{proposition}
The minimization of the objective functional \eqref{EqCriteria} over a collection of image pairs provides diffeomorphic deformations for every pair of images. At every stage of the optimization procedure, the deformations are guaranteed to be diffeomorphic.
\end{proposition}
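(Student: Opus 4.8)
The plan is to reduce the claim to the classical flow theorem of Dupuis, Grenander and Miller~\cite{dupuis1998}: if a (possibly time-dependent) velocity field satisfies $\int_0^1 \|v(\cdot,t)\|_{C^1}\,\ud t < \infty$, where $\|\cdot\|_{C^1}$ controls the field together with its first spatial derivatives uniformly, then the flow solving $\Phi_t = v\circ\Phi$ is a diffeomorphism at every time. Since vSVF uses a \emph{stationary} field, this integral collapses to $\|v_0\|_{C^1}$, so the whole statement follows once I show that $v_0$ defined in~\eqref{EqNewSmoothing} lies in $C^1$ with a finite and uniformly controllable norm, for every admissible momentum $m_0$ and every admissible weight configuration.

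First I would record the structural facts that make $v_0$ regular. Writing $K(w) = \sum_{i} M_{\sqrt{w_i}}\,(G_i\star\,\cdot\,)\,M_{\sqrt{w_i}}$, where $M_g$ denotes multiplication by $g$, the operator is manifestly self-adjoint and positive on $L^2$ (each Gaussian convolution is positive since $\widehat{G_i}>0$, and the symmetric sandwiching by $M_{\sqrt{w_i}}$ preserves positivity), so $\langle m_0,v_0\rangle\geq 0$ is a genuine metric term. The offset makes $w_i(x)=G_{\sigma_{\text{small}}}\star\omega_i(x)+\varepsilon_i\geq\varepsilon_i>0$, and convolution with a Gaussian renders each $w_i$ smooth with first derivatives bounded by $\|\omega_i\|_\infty\,\|\nabla G_{\sigma_{\text{small}}}\|_{L^1}$. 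Crucially, because $w_i$ is bounded \emph{away from zero}, the chain rule gives $\nabla\sqrt{w_i}=\nabla w_i/(2\sqrt{w_i})$ with $\|\nabla\sqrt{w_i}\|_\infty\leq\|\nabla w_i\|_\infty/(2\sqrt{\varepsilon_i})$, so $\sqrt{w_i}\in C^1$ with controlled norm.

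Next I would bound each convolution term. Each $G_i$ is $C^\infty$ with all derivatives in $L^2$, so $\|G_i\star g\|_\infty\leq\|G_i\|_{L^2}\|g\|_{L^2}$ and $\|\nabla(G_i\star g)\|_\infty=\|(\nabla G_i)\star g\|_\infty\leq\|\nabla G_i\|_{L^2}\|g\|_{L^2}$; taking $g=\sqrt{w_i}\,m_0\in L^2$ (since $\sqrt{w_i}$ is bounded and $m_0\in L^2$) shows $G_i\star(\sqrt{w_i}m_0)\in C^1$. Then $v_0=\sum_i\sqrt{w_i}\cdot\bigl(G_i\star(\sqrt{w_i}m_0)\bigr)$ is a finite sum of products of $C^1$ functions, so the product rule yields a bound on $\|v_0\|_{C^1}$ in terms of the $\varepsilon_i$, the kernel norms, and $\|m_0\|_{L^2}$. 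This is finite at the minimizer and, since the clamping to $[\varepsilon,1]$ forces $w_i\geq\varepsilon_i>0$ at \emph{every} iterate while $m_0$ stays finite, the $C^1$ bound persists throughout optimization; applying the flow theorem at each stage then gives both the ``every stage'' assertion and diffeomorphicity for every image pair.

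The main obstacle is the square-root weighting: the delicate step is establishing a \emph{uniform} $C^1$ bound on $\sqrt{w_i}$, because a vanishing weight would create a square-root singularity with unbounded gradient and destroy the embedding into $C^1$. This is precisely why the positive offset $\varepsilon_i$ (the ``pre-weights are not too degenerate'' hypothesis) is indispensable: it is the single ingredient that converts the formally-positive spatially-varying metric into one whose velocity fields are genuinely admissible for the diffeomorphic flow theorem. The remaining estimates are routine Young/product-rule bookkeeping, and I would note that the OMT and TV penalties play no role in the guarantee itself --- diffeomorphicity is structural, not a consequence of minimization.
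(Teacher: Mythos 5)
Your proof is correct and shares the skeleton of the paper's own argument: establish a $C^1$ bound on $v_0$ using the positive offset $\varepsilon_i$ to control $\sqrt{w_i}$, then invoke the flow theorem of \cite{dupuis1998}. The genuine difference is in how the convolution term $G_i\star(\sqrt{w_i}\,m_0)$ is controlled. You assume $m_0\in L^2$ and use elementary Cauchy--Schwarz/Young estimates, $\|G_i\star g\|_\infty\le\|G_i\|_{L^2}\|g\|_{L^2}$ and $\|\nabla(G_i\star g)\|_\infty\le\|\nabla G_i\|_{L^2}\|g\|_{L^2}$. The paper instead works with the reproducing-kernel Hilbert space norms $H_i$ of the Gaussians, the embedding $\|f\|_{C^1}\le K\|f\|_{H_i}\le K\|f\|_{H_N}$, and the observation that the finiteness of the registration energy $\langle m_j,K(w)m_j\rangle$ --- a summand of the objective \eqref{EqCriteria} --- already bounds $\|G_i\star(\sqrt{w_i}\,m_j)\|_{H_i}$. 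The paper's route therefore derives regularity from the finiteness of the functional being minimized, which is what the proposition as stated is about, whereas yours needs the additional (mild, and automatic after discretization) hypothesis that the momentum is square-integrable; in the continuum a momentum of finite energy $\langle m,K(w)m\rangle$ need not lie in $L^2$, so the hypotheses are not identical. Your remark that the OMT and TV penalties play no role is essentially right and slightly sharper than the paper's own wording: the paper invokes finiteness of the TV term to get pointwise control of the pre-weights, but that control already follows from the simplex constraint and the clamping to $[\epsilon,1]$. Both arguments cover the ``at every stage'' claim in the same way, since the offset and the boundedness of the iterates persist throughout optimization.
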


\begin{proof}
We have the existence of a constant $K$ such that 
\begin{equation}
\| f \|_{C^1} \leq K\| f \|_{H_i} \leq K\| f \|_{H_N} \,,
\end{equation}
for every $f \in H_N$.
\par
Denote by $\Phi: (I,m) \mapsto \omega$ the nonlinear map learnt by the neural network.
At every step of the optimization, and at convergence (for a finite sample of pairs of images, each pair is denoted by the index $j$), the functional \eqref{EqCriteria} is finite, which implies that $\Phi(I_j,m_j)$ is pointwisely bounded on the domain and is in $TV$, therefore, $G_{\text{small}} \star w_i$ has a bounded $C^1$ norm, as well as $\sqrt{w_i}$ since $w_i > \varepsilon_i >0$. In addition, $E_j = \langle m_j, K(w) m_j \rangle$ is also finite and gives an upper bound for $\| G_N \star (w_i m_j) \|_{H_N}$. Thus, we have
\begin{multline}
\| \sum_{i = 0}^{N-1} \sqrt{w_i(x)} G_i(|x-y|) \sqrt{w_i(y)} \star m_j \|_{C^1} \\ \leq KN \sup_{i = 1,\ldots,N} (\| \sqrt{w_i} \|_{C^1} \| \sqrt{w_i} m_j \|_{H_N})\,.
\end{multline}
Therefore, the norm of the velocity field $v(x) = \sqrt{w_i(x)} G_i(|x-y|) \star \sqrt{w_i}  m_j$ is bounded in $C^1$ and its flow is a diffeomorphism.
\end{proof}
Also, there is a corresponding variational derivation of the spatially varying kernel with the square root which is presented next.

\subsubsection{Variational derivation}
Let us detail the variational definition of the spatially varying kernel used in Equation \eqref{EqNewSmoothing}. 
Consider 
\begin{equation} \label{InfNorm}
  \|v\|_{H}^2 = \inf \left\{ \sum_{i=0}^{N-1} \|v_i\|_{H_i}^2 \, \Big | \, \sum_{i= 0}^{N-1} \sqrt{w_i} v_i = v \right \}\,.
\end{equation}
Using Lagrange multipliers, we get critical points of the functional 
\begin{equation}
  \sum_{i=0}^{N-1} \frac{1}{2}\|v_i\|_{H_i}^2 + \langle p ,  \sum_{i= 0}^{N-1} \sqrt{w_i} v_i -v\rangle\,,
\end{equation}
therefore we get
\begin{equation}
  L_iv_i + w_i p= 0\, \quad \forall i = 0,\ldots,N-1\,,
\end{equation}
where $L_i$ is the inverse of the kernel $G_i$.
Hence, there exists $p$ such that 
$$\|v\|_{H}^2 = \sum_{i=0}^{N-1} \langle G_i \sqrt{w_i} p,\sqrt{w_i} p \rangle$$ 
for the norm. Moreover, since $v_i = G_i \sqrt{w_i} p$, we have
\begin{equation}
  v = \sum_{i= 0}^{N-1} \sqrt{w_i} G_i (\sqrt{w_i} p)\,.
\end{equation}

\section{Implementation details}
\label{sec:implementation_details}

\noindent
{\bf CNN initialization/penalty.} Directly using the CNN as described in \S\ref{subsection:cnn_regressor} does, in our experience, not lead to stable estimation results for the weights. Proper initialization and penalizing undesirable weights is therefore essential. Specifically, we use the following approaches:
\begin{itemize}
\item[1)] {\it Initialization:} We initialize all bias terms to zero and use the initialization scheme from \cite{He15a} for the convolutional weights. For the last batch normalization layer we initialize the slope to a small value ($0.025$) to avoid massive weight changes at the beginning as the registration is very sensitive to such changes.
\item[2)] {\it Weighted linear softmax input penalty:} As the  weighted linear softmax function clamps inputs, values within the clamping range will no longer produce gradients. In our experiments this was a highly problematic behavior as it appeared to lead to cases where one could not easily recover from poor locations in the input space to the weighted linear softmax\footnote{Similarly, if one uses a standard softmax function then exponential terms may result in very small gradients.}. Hence, we penalize the inputs when they are outside the $[0,1]$ range as follows:
  \begin{multline}
    \rp(z)=\\ \sum_{i=0}^{N-1}\left(w_i+z_i-\overline{z}-\text{clamp}_{\epsilon,1}(w_i+z_i-\overline{z})\right)^2.
  \end{multline}
  Here, $\text{clamp}_{\epsilon,1}$ clamps values to the interval $[\epsilon,1]$. An $\epsilon>0$ is required as the square root is not differentiable at zero. This penalty is integrated over all of space and added to the overall registration energy, \ie,
  \begin{equation}
    \RP(z(x)) = \int \rp(z(x))~\mathrm{d}x\enspace.\label{eq:pre_weight_input_range_penalty}
  \end{equation}
  We did not experiment with weightings of this term and simply added it as is. In practice this appeared to be fine (but may warrant further investigation) as the term results in zero penalty when the input values to the weighted linear softmax are not clamped and it is operating in its linear regime. 
\item[3)] {\it Weight decay:} We use a small weight decay \cite{Hanson88a} (set to 1e-5) applied to all the network weights. However, we did not extensively experiment with this parameter. Hence, its practical necessity is not clear to us at the moment. We added it to mitigate possible drift in the estimated parameters (\eg, very large weights of the convolutional filters). 
\end{itemize}

\section{Generation of synthetic data}
\label{sec:synthetic_experiment_setup}

To be able to validate with respect to a known ground truth we construct synthetic data as follows:
\begin{itemize}
\item[1)] We generate concentric circular regions with random radii and associate different multi-Gaussian weights to these regions. We associate a fixed multi-Gaussian weight to the background.
\item[2)] We randomly create vector momenta at the borders of the concentric circles. Specifically, we randomly create 10 different sectors and, within each sector, we randomly create either all positive or negative momenta orthogonal to the circle boundaries. These momenta are smoothed afterwards. 
\item[3)] Based on 2), we create a deformation.
\item[4)] We randomly create a noisy image of the same dimension as the image of the concentric circles and smooth it. We add this smoothed noise image to the concentric circle image and deform it and its associated weights given the deformation from 3). The resulting image is our synthetic source image. We also transform the image without noise.
\item[5)] We repeat steps 2) to 4), starting from the synthetic source image without noise. The resulting deformation is applied to the (noisy) synthetic source image to create the synthetic target image.
\end{itemize}
These steps are repeated to obtain a desired set of image pairs.
\end{document}